\documentclass[letterpaper]{article} 
\usepackage{aaai2027}  
\nocopyright
\usepackage[hyphens]{url}  
\usepackage{graphicx} 
\usepackage{natbib}  
\usepackage{caption} 
\usepackage{algorithm}
\usepackage{algorithmic}

\usepackage{amsmath}
\usepackage{amssymb}
\usepackage{amsthm}
\newtheorem{theorem}{Theorem}
\newtheorem{lemma}{Lemma}

\newtheorem{proposition}{Proposition}
\newtheorem{remark}{Remark}
\newtheorem{assumption}{Assumption}

\usepackage{booktabs}
\usepackage{multirow}

\newcommand{\xtil}{\tilde{\mathbf{x}}}

\newcommand{\CVF}{\mathrm{CV}_F}

\usepackage{newfloat}
\usepackage{listings}
\DeclareCaptionStyle{ruled}{labelfont=normalfont,labelsep=colon,strut=off} 
\floatstyle{ruled}
\newfloat{listing}{tb}{lst}{}
\floatname{listing}{Listing}

\usepackage{booktabs}

\title{
FedFIbOS: Fisher Importance based Optimal Submodelling for Heterogeneous Federated Learning
}

\author{
Yasmeen Afzal\textsuperscript{\rm 1},
Jeremiah D. Deng\textsuperscript{\rm 1},
Haibo Zhang\textsuperscript{\rm 2}
}

\affiliations{
\textsuperscript{\rm 1}School of Computing, University of Otago, New Zealand\\
\textsuperscript{\rm 2}School of Computer Science and Engineering, University of New South Wales, Australia\\
afzya097@student.otago.ac.nz, jeremiah.deng@otago.ac.nz, haibo.zhang@unsw.edu.au
}

\begin{document}

\maketitle

\begin{abstract}
Heterogeneous federated learning requires clients with diverse
computational capacities to collaboratively train a global model,
where each client trains a capacity-constrained submodel.
Existing methods select submodel parameters using heuristic importance
measures---most prominently parameter magnitude---without theoretical
justification for why these measures support convergence.
We identify a fundamental gap: existing parameter selection criteria
lack theoretical grounding in the convergence framework, partial client participation introduces additional estimation effects in the Fisher scores.
We propose \textbf{FedFIbOS}: Fisher Importance-based Optimal Submodelling for heterogeneous federated learning, using Fisher Information in a principled criterion derived from minimizing submodel masking error.
We theoretically formulate submodel selection through a
Fisher-weighted quadratic masking surrogate and show that the
raw Fisher top-$k$ rule implemented by FedFIbOS solves this
surrogate under a Fisher-dominant ranking condition. The resulting method retains the convergence structure of the underlying masked federated optimization bound.
Fisher scores are efficiently estimated from empirical diagonal Fisher information using squared gradients, enabling stable and adaptive parameter selection without additional optimization overhead.
Experiments on CIFAR-10, CIFAR-100, and AGNews under pathological and
Dirichlet non-IID settings show FedFIbOS achieves ${\approx}10\%$ higher
accuracy than the state of the art, with improvements becoming more pronounced under stronger
heterogeneity.
\end{abstract}
\section{Introduction}
\label{sec:intro}
 
Federated learning (FL) enables
collaborative model training across distributed clients without sharing
private data \cite{mcmahan2017communication}. A central challenge in practical deployments is
\emph{model heterogeneity}: clients span diverse hardware with vastly
different compute, memory, and communication
budgets~\citep{fang2025automated,pfeiffer2023federated, chen2023efficient}. Model-heterogeneous
federated learning (MHFL) addresses this by assigning each client a submodel derived from the global model and tailored to its resource constraints. The quality of the extracted submodels can directly affect
convergence speed and final accuracy \cite{liao2023adaptive, yi2024fedp3}.

Existing MHFL methods primarily construct client submodels using three strategies: static slicing~\cite{diao2021heterofl}, dynamic rolling-based extraction~\cite{alam2022fedrolex,liao2025fedbrb}, and importance-based selection\cite{wu2024fiarse,chen2023efficient,liao2023adaptive, yi2024fedp3}. Static and dynamic strategies determine submodels according to predefined architectural rules rather than optimization relevance, which can lead to suboptimal parameter allocation, larger gradient mismatch, and increased client drift under heterogeneous data distributions.   Importance-based methods address this partially, but
existing approaches rank parameters by magnitude, a
criterion independent of each client's local data
distribution, thereby providing no theoretical guarantee of
minimising the masking approximation error that
directly governs convergence~\citep{wu2024fiarse}.

To address these limitations, we propose
\textbf{FedFIbOS} (\textbf{F}isher-\textbf{I}mportance-\textbf{b}ased
\textbf{O}ptimal \textbf{S}ubmodel Extraction for Heterogeneous
Federated Learning), which constructs client-specific submodels by selecting parameters based on their importance estimated using Fisher information.
Instead of selecting parameters based on
their position within the network, FedFIbOS retains the
most optimization-relevant parameters for each client
while respecting its computational capacity. As illustrated in
Figure.~\ref{fig:FIbASE}, the colored connections represent the relative Fisher importance of the selected parameters. Red edges correspond to the highest-ranked parameters, which have the greatest estimated impact on the loss and are therefore prioritised for inclusion in the submodel. Yellow edges indicate parameters of moderate importance, which are included when model capacity is available. Green edges represent lower-ranked parameters that are assigned only to high-capacity clients with sufficient computational resources.
Consequently, low-capacity clients receive only the most important (red) parameters, medium-capacity clients receive both red and yellow parameters, while high-capacity clients receive parameters in all three groups.

\paragraph{Main Contributions}
Our main contributions are summarized as follows.

\begin{itemize}

\item \textbf{Fisher Guided submodel extraction.}
We propose a Fisher Importance based Optimal Submodelling method that employs empirical diagonal Fisher information to identify the most informative parameters for submodel extraction in heterogeneous federated learning.

\item \textbf{Theoretical foundation for Fisher-guided selection.}
We formulate the Submodel Selection Problem (SSP), identify the conditions under which different importance
criteria are equivalent, establish the optimality of raw Fisher selection under the SSP formulation under the Fisher-dominant ranking condition.

\item \textbf{Convergence analysis.}
We prove that Fisher-guided selection preserves the standard
$\mathcal{O}(1/\sqrt{T})$ convergence rate, where T is the number of communication rounds while yielding a no-larger selection-dependent contribution to
the convergence bound than magnitude-based selection, with strict
improvement when the corresponding masking-error coefficients
differ.

\item \textbf{Extensive experimental validation.}
Experiments on CIFAR-10 , CIFAR-100 , and AGnews under diverse model and data heterogeneity settings demonstrate consistent improvements over state-of-the-art MHFL methods and validate the theoretical analysis.

\end{itemize}

\begin{figure}[t]
\centering
\includegraphics[width=\linewidth]{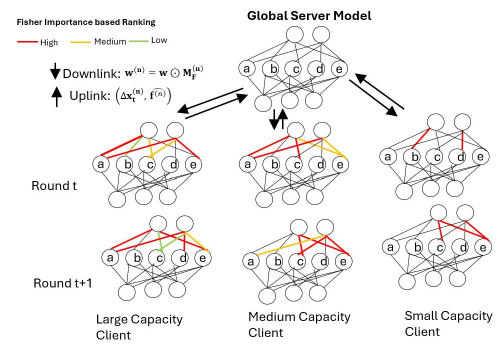}
\caption{Overview of Submodel Extraction for model training in FedFIbOS.}
\label{fig:FIbASE}
\end{figure}

\section{Prior work and its limitation.}
\paragraph{Static submodel extraction.}
 Early heterogeneous-FL methods such as HeteroFL~\citep{diao2021heterofl} and related approaches~\citep{horvath2021fjord, li2021fedmask, li2020lotteryfl} assign clients subnetworks or masks with predefined or client-specific structures, reducing computation and communication but limiting adaptation of the extracted submodels to evolving training dynamics.

\paragraph{Dynamic submodel extraction.}
FedRolex~\citep{alam2022fedrolex} rolls the submodel selection
channel-wise across rounds, ensuring equal training opportunity
for each channel.
However, on square-channel tensors the rolling reduces to diagonal
cycling, leaving large parameter regions permanently untrained
regardless of the number of rounds.
FedBRB~\citep{liao2025fedbrb} addresses this by partitioning the global
model into blocks and rolling block-wise, with weighted broadcast
to accelerate information sharing across submodels, achieving full
parameter coverage that FedRolex cannot guarantee.
Despite this improvement, both methods remain data-agnostic: neither
asks which parameters matter most for the loss under each client's
local data distribution.
A capacity-constrained client trains whichever slice the rolling
schedule assigns, not the slice most important for convergence.
\cite{wen2022federated,liao2023adaptive}
\paragraph{Importance-aware submodel extraction.}

FIARSE~\citep{wu2024fiarse} advances importance-aware submodel
extraction by selecting parameters based on magnitude and establishing
the first convergence guarantee for model-heterogeneous FL under
partial participation.
However, its convergence analysis treats the masking error as a bounded black-box term, without analysing its structure or the optimality of the underlying selection criterion.
FedLAGC~\citep{hu2026fedlagc} addresses a related but distinct
limitation by introducing layer-adaptive importance guided by
gradient norms, yet its selection criterion remains a heuristic
with no connection to the quantity that governs convergence.
This connection between the selection criterion and the quantity
governing convergence has remained unestablished across all existing
approaches, leaving submodel selection without a principled
theoretical foundation.
\paragraph{Fisher Information for model compression.}
Classical pruning methods use curvature to quantify parameter importance. Optimal Brain Damage~\citep{lecun1989optimal} uses a diagonal Hessian approximation, while Optimal Brain Surgeon~\citep{hassibi1992second} accounts for parameter interactions through the inverse Hessian. Under standard regularity conditions, Fisher Information provides a positive-semidefinite, gradient-based surrogate for the Hessian, motivating its use for efficient importance estimation~\citep{molchanov2019importance}. Recent work extends Fisher-based importance to model sparsification, including FisherLAS~\citep{sun2026fisher} for LLMs. In federated learning, Fed-HeLLo~\citep{zhang2025fed} uses layer-wise Fisher information for heterogeneous LoRA allocation, while other works explore Fisher-guided pruning, parameter selection, and privacy-aware optimization~\citep{chen2024sparsified,liu2024fisher}. FedFIbOS instead uses Fisher information for submodel selection under non-IID heterogeneity and partial participation, with formal convergence guarantees.


\section{Problem Formulation}
\label{sec:prob}
Consider a FL system with $N$ clients, which each local dataset for client $n$ is drawn from
distributions $\{P_n\}_{n=1}^N$.
At each round $t$, a subset $\mathcal{A}\subseteq[N]$
of $A$ clients is sampled to participate  training.
Each client $n$ has a compute capacity $\gamma_n\in(0,1]$
and is assigned a submodel containing $k_n=\lceil\gamma_n d\rceil$
parameters from the global model $\xtil\in\mathbb{R}^d$.
A binary mask $M^{(n)}\in\{0,1\}^d$,
where $\|M^{(n)}\|_0=k_n$, specifies the parameters included in the submodel.
The global learning objective is to minimise
\begin{equation}
  \mathcal{F}(\xtil) =
  \frac{1}{N}\sum_{n=1}^N
  \mathbb{E}_{(\mathbf{x},y)\sim P_n}[\mathcal{L}(\xtil;\mathbf{x},y)].
  \label{eq:global_obj}
\end{equation}

When client $n$ trains on $\xtil_t\odot M^{(n)}$ rather
than the full model, local gradient computation incurs a
\emph{masking approximation error}:
\[
  e_t^{(n)} \;\triangleq\;
  \bigl\|\nabla F_n(\xtil_t)\odot M^{(n)}
  - \nabla_{\xtil_t}F_n(\xtil_t\odot M^{(n)})
  \bigr\|_2^2.
\]
Following~\citep{wu2024fiarse}, we bound this error as:
\begin{equation}
  e_t^{(n)}
  \;\leq\; \delta_t^2\,\|\xtil_t\|_2^2,
  \label{eq:mask_ine}
\end{equation}
where $\delta_t^2\geq 0$ is the smallest constant for
which Inequality (\ref{eq:mask_ine}) holds.
The parameter subset
$M^{(n)}$ directly determines
$e_t^{(n)}$. Fisher-guided selection includes the
parameters whose exclusion would induce the largest
gradient discrepancy, thereby minimising
$e_t^{(n)}$ and tightening the
convergence bound.
The selection criterion determines $\delta_t^2$: a smaller
$\delta_t^2$ means the bound is tighter and local
optimisation is more faithful to the full model.

This error directly governs convergence.
The bound for masked heterogeneous FL~\citep{wu2024fiarse}
takes the form:
\begin{equation}
  \min_{t\in[T]}\|\nabla F(\xtil_t)\|_2^2
  \leq
  \underbrace{C_1(T)}_{\to\,0}
  +
  \underbrace{C_2}_{\text{heterogeneity}}
  +
  \underbrace{
    \frac{32N}{T}\sum_t\delta_t^2\|\xtil_t\|_2^2
  }_{\text{masking error (Term 3)}},
  \label{eq:conv_form}
\end{equation}
where $C_1(T)\to 0$ and $C_2$ depends only on data
heterogeneity.
Term~3 is the only term governed by the choice of $S$:
minimising $\delta_t^2$ directly tightens the convergence neighbourhood.

To obtain a tractable selection criterion, let
$\Delta_t(S)=\tilde{x}_t\odot M_S-\tilde{x}_t$.
A first-order expansion of the local gradient gives
\[
\nabla F_n(\tilde{x}_t\odot M_S)
\approx
\nabla F_n(\tilde{x}_t)
+
H_n(\tilde{x}_t)\Delta_t(S).
\]
Near a stationary point, the residual term
$\nabla F_n(\tilde{x}_t)\odot(1-M_S)$ is negligible, so the
selection-dependent masking discrepancy is locally approximated by
\[
\|H_n(\tilde{x}_t)\Delta_t(S)\|_2^2.
\]
Under a diagonal-Hessian approximation and the local
Fisher--Hessian correspondence, this yields the quadratic surrogate
\begin{equation}
\epsilon_t^{(n)}(S)
\triangleq
\sum_{j\notin S}
\left(F^{(n)}_{t,j}\right)^2
\tilde{x}_{t,j}^2.
  \label{eq:eps}
\end{equation}

This surrogate provides a tractable approximation to the
selection-dependent masking error.

The resulting Submodel Selection Problem (SSP) is:
\begin{equation}
S^{\star(n,t)}
=
\arg\min_{S\subseteq[d],\,|S|=k_n}
\epsilon_t^{(n)}(S).
  \label{eq:SSP}
\end{equation}
Its exact solution is Fisher-weighted top-$k$ selection:
\begin{equation}
S^{\star(n,t)}
=
\operatorname{TopK}_{k_n}
\left(
\left(F_{t,j}^{(n)}\right)^2\tilde{x}_{t,j}^2
\right).
\end{equation}
Existing magnitude-based methods~\citep{wu2024fiarse}
instead minimise $\sum_{j\notin S}\tilde{x}_{t,j}^2$,
which omits the Fisher weights $(F_j^{(n)})^2$ and
provides no guarantee of solving~\eqref{eq:SSP}.
This surrogate is particularly unreliable under non-IID
data: $|\xtil_j|$ is a property of the shared global
model identical across all clients, whereas $F_j^{(n)}$
captures the sensitivity of $F_n$ to parameter $j$ under
client $n$'s local distribution $P_n$.
Since $F_{t,j}^{(n)}$ contributes directly to the Fisher-weighted
importance score in (4), it provides the basis for the raw-Fisher
criterion used by FedFIbOS; under Assumption 4, this criterion
induces the same ordering as the exact SSP objective.

We further use $\CVF$ to quantify heterogeneity in Fisher importance scores and empirically examine its relationship with selection divergence, where
$\CVF=\mathrm{std}(F_j^{(n)})/\mathrm{mean}(F_j^{(n)})$.

\section{FedFIbOS}

We propose FedFIbOS, a principled
importance-aware submodel extraction framework for
model-heterogeneous federated learning. 
Although it retains the optimization framework of FIARSE, 
what makes the difference is that FedFIbOS measures parameter importance using
Fisher Information, which directly quantifies the sensitivity
of the local objective to parameter perturbations.
\paragraph{Fisher Information.}

The diagonal Fisher Information score for parameter $j$ at client $n$:
\begin{equation}
  F_j^{(n)} =
  \mathbb{E}_{(x,y)\sim P_n}\!\!\left[
  \left(\frac{\partial\log p(y|x,\theta)}{\partial\theta_j}\right)^{\!2}
  \right]
  = \mathbb{E}_{P_n}\!\left[
  \left(\frac{\partial\mathcal{L}}{\partial\theta_j}\right)^{\!2}
  \right].
  \label{eq:fisher_def}
\end{equation}
We use $F_{t,j}^{(n)}$ to denote the Fisher score evaluated at
the round-$t$ model $\tilde{x}_t$; when the round index is
unambiguous, we write $F_j^{(n)}$ for brevity.
\begin{equation}
  \hat{f}_j^{(n)} =
  \frac{1}{KB}\sum_{k=0}^{K-1}\sum_{b=1}^{B}
  \left(\frac{\partial\mathcal{L}(x_{t,k}^{(n)},\xi_b)}
  {\partial\theta_j}\right)^{\!2}.
  \label{eq:fisher_est}
\end{equation}
The empirical diagonal Fisher Information is computed from the squared
gradients already available during local optimization, thereby avoiding
any additional gradient evaluations. To reduce the variance of the
estimated importance scores, the server maintains an exponential moving
average (EMA) of the aggregated Fisher estimates over communication
rounds.
The server maintains an EMA across rounds:
\begin{equation}
  \bar{f}_j^{(n,t+1)} =
  \alpha\,\bar{f}_j^{(n,t)} + (1{-}\alpha)\,\hat{f}_j^{(n)}.
  \label{eq:ema}
\end{equation}
 
\subsection{Fisher Importance based Submodel Extraction}
 
The quadratic SSP is minimized by Fisher-weighted scores
$I_{t,j}^{(n)}=(F_{t,j}^{(n)})^2\tilde{x}_{t,j}^2$. FedFIbOS instead
implements raw Fisher top-$k$ selection using the maintained score
\begin{equation}
  M_F^{(n)}(\tilde{x}) =
\left\{
j :
\bar{f}^{(n,t)}_j
\in
\operatorname{TopK}_{\lceil\gamma_n d\rceil}
(\bar{f}^{(n,t)})
\right\}.
  \label{eq:fgiase_mask}
\end{equation}

For each participating client, FedFIbOS constructs a client-specific submodel using the maintained Fisher importance estimates. Given the target model ratio $\gamma_n$, the server identifies the $\lceil\gamma_n d\rceil$ most important parameters and generates a binary mask that determines the subset of the global model transmitted for local training. Under Assumption 4, introduced in the theoretical analysis, the
Fisher and Fisher-weighted rankings coincide; hence this implemented
raw-Fisher rule solves the quadratic SSP under that condition.

\begin{algorithm}[t]
\caption{\textbf{FedFIbOS:}
  Fisher-Importance-based Optimal Submodel
Extraction }
\label{alg:fgiase}
\begin{algorithmic}[1]

\STATE \textbf{Input:} $\tilde{x}_0\in\mathbb{R}^d$,
  $\{\gamma_n\}_{n=1}^N$,
  rounds $T$, local steps $K$, batch size $B$,
  rates $\eta_l, \eta_s$, EMA rate $\alpha\in(0,1)$

\STATE \textbf{Server Initialization:}
  $\bar{f}^{(n,0)} \leftarrow |\xtil_0|$,
  $\;\forall n\in[N]$

\FOR{$t = 0, 1, \ldots, T{-}1$}

  \STATE Sample $\mathcal{A}\subseteq[N]$,
    $|\mathcal{A}|{=}A$ uniformly without replacement

  \FOR{each $n\in\mathcal{A}$}
    \STATE $M_F^{(n)} \leftarrow
      \bigl\{j\in[d] : \bar{f}_j^{(n,t)} \in
      \mathrm{TopK}_{\gamma_n}\!(\bar{f}^{(n,t)})\bigr\}$,
     
    \STATE Send $\xtil_t \odot M_F^{(n)}$ to client $n$
  \ENDFOR

  \FOR{each $n\in\mathcal{A}$ in parallel}
    \STATE $\mathbf{x}_{t,0}^{(n)} \leftarrow \xtil_t \odot M_F^{(n)}$
    \FOR{$k = 0, \ldots, K{-}1$}
      \STATE $\mathbf{g}_{t,k+1}^{(n)} \leftarrow
        \nabla_{\mathbf{x}_{t,k}^{(n)}}
        F_n\!\left(\mathbf{x}_{t,k}^{(n)} \odot
        M_F^{(n)}\!\left(\mathbf{x}_{t,k}^{(n)}\right)\right)$
      \STATE $\mathbf{x}_{t,k+1}^{(n)} \leftarrow
        \mathbf{x}_{t,k}^{(n)} - \eta_l \cdot \mathbf{g}_{t,k+1}^{(n)}$
      \STATE $\hat{f}_j^{(n)} \mathrel{+}=
  \frac{1}{KB}
  \left(\mathbf{g}_{t,k+1,j}^{(n)}\right)^{2}$
    \ENDFOR
    \STATE $\Delta x_t^{(n)} \leftarrow
      \xtil_t - \mathbf{x}_{t,K}^{(n)}$
    \STATE Send $\Delta \mathbf{x}_t^{(n)}$ and
      $\hat{f}^{(n)}$ to server
  \ENDFOR

  \STATE $\xtil_{t+1} \leftarrow \xtil_t -
  \frac{\eta_s}{A}
  \sum_{n\in\mathcal{A}} \Delta \mathbf{x}_t^{(n)}$
  \FOR{each $n\in\mathcal{A}$}
    \STATE $\bar{f}^{(n,t+1)} \leftarrow
      \alpha\,\bar{f}^{(n,t)} +
      (1{-}\alpha)\,\hat{f}^{(n)}$
  \ENDFOR
  \STATE $\bar{f}^{(n,t+1)} \leftarrow \bar{f}^{(n,t)}$,
    $\quad\forall n\notin\mathcal{A}$

\ENDFOR

\end{algorithmic}
\end{algorithm}

\paragraph{Algorithm}
At each round, the server samples a set of participating clients and construct a client specific Fisher-guided mask
$M_F^{(n)}$ from the EMA $\bar{f}^{(n,t)}$
(Algorithm~\ref{alg:fgiase}, line~3-6). The mask retains the
$\lceil\gamma_n d\rceil$ parameters with the largest
estimated Fisher importance, and the corresponding
masked global submodel is transmitted to client $n$.
Each client performs $K$ local SGD steps on its masked
submodel while simultaneously accumulating squared
gradients to estimate the empirical Fisher information
(line~14).  The server
aggregates the received updates to form the next global
model (line~19), updates the EMA Fisher scores of
participating clients (lines~21--23), and keeps the EMA
scores of non-participating clients unchanged
(line~23). Proposition~\ref{prop:ema} characterizes the decay of initialization bias in the
EMA Fisher estimates under partial participation.

\section{Experiments}
\label{sec:Exp}

\paragraph{Datasets and Models.}
We evaluate the proposed FedFIbOS on three benchmark datasets spanning both computer vision and natural language processing tasks: CIFAR-10, CIFAR-100~\cite{krizhevsky2009learning}, and AGNews~\cite{zhang2015character}. For the computer vision datasets, we adopt ResNet-18 as the backbone model, while for AGNews we fine-tune a pretrained RoBERTa-Base model. This selection enables evaluation across diverse modalities and model architectures.

\paragraph{Data Heterogeneity.}
Experiments are conducted under two widely adopted non-IID partitioning strategies: pathological partitioning and Dirichlet partitioning. For pathological partitioning, we consider two levels of heterogeneity. \emph{Low heterogeneity} assigns 5 and 50 classes per client for CIFAR-10 and CIFAR-100, respectively, whereas \emph{high heterogeneity} assigns only 2 and 20 classes per client. Since AGNews contains only four classes, pathological partitioning is not sufficiently representative; therefore, AGNews is evaluated only under Dirichlet partitioning. For Dirichlet partitioning, the concentration parameter is set to $\alpha=0.3$ for CIFAR-10 and CIFAR-100~\cite{krizhevsky2009learning}, and $\alpha=1.0$ for AGNews~\cite{zhang2015character}.

\paragraph{Model Heterogeneity.}
Following the model-heterogeneous federated learning setting, each client is assigned one of five model capacities corresponding to model ratios ${1.0, 0.5, 0.25, 0.125, 0.0625}$, where a ratio of $1.0$ denotes the complete global model and smaller ratios correspond to progressively reduced submodels. We evaluate two client-capacity distributions. The \emph{uniform} distribution assigns 20\% of clients to each model size (a20-b20-c20-d20-e20). The \emph{non-uniform} distribution (a10-b10-c30-d30-e20) better reflects practical deployments by assigning 80\% of clients to resource-constrained devices (model ratios 0.25--0.0625) and the remaining 20\% to higher-capacity devices (model ratios 0.5 and 1.0). For CIFAR-10 and CIFAR-100, experiments are conducted with 100 clients and a client participation rate of 10\% per round, and training for $800$ communication
rounds. For AGNews, we use 200 clients with the same participation rate of 10\%, and training for $300$ communication
rounds.

\begin{figure*}[t]
    \centering
    \includegraphics[width=.75\textwidth]{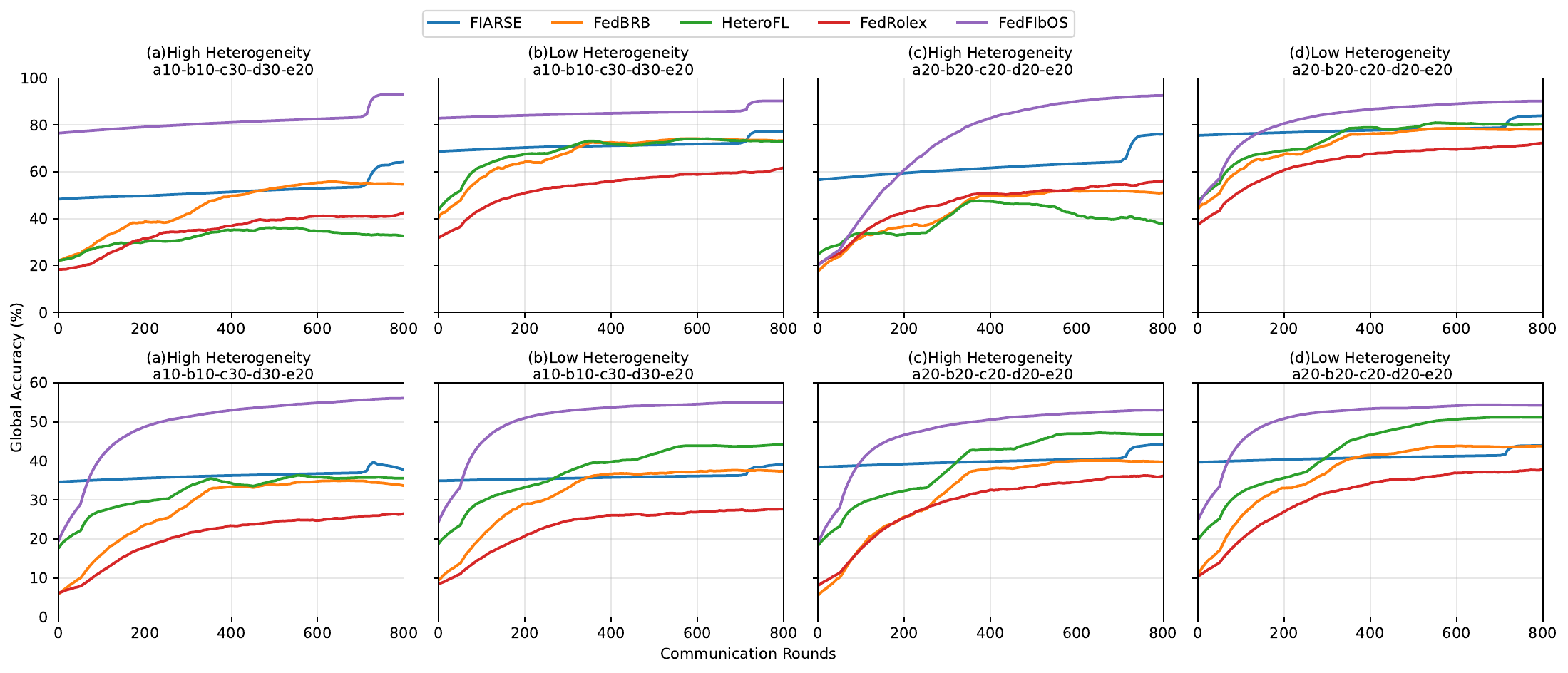}
    \caption{Global test accuracy comparison of FIARSE, FedBRB, HeteroFL, FedRolex, and FedFIbOS on CIFAR-10(first row) and CIFAR-100(second row) under four heterogeneous settings. Curves show the moving average of global test accuracy over communication rounds.}
    \label{fig:global_accuracy}
\end{figure*}
\paragraph{Structural divergence and resource sensitivity.}
Figure~\ref{fig:jaccard} shows a consistent two-phase
behaviour across all capacity-constrained model sizes.
In early rounds, Fisher and magnitude selections largely
agree because the Fisher EMA is warm-started from
parameter magnitude (Algorithm~\ref{alg:fgiase},
line~2), before gradient information is available.
As local gradient estimates accumulate and the EMA
converges toward the true Fisher distribution, the
two selected parameter sets diverge rapidly and
stabilise at a persistently low Jaccard plateau.
The observed persistent low plateau is an empirical finding; Lemma \ref{lem:equiv} provides a theoretical condition under which such selection divergence can occur.

The divergence intensifies under tighter capacity
constraints - Jaccard stabilises at ${\approx}0.42$
at $\gamma{=}0.5$ and falls to ${\approx}0.22$ at
$\gamma{=}0.0625$ - because a sharper selection
boundary amplifies the consequence of choosing the
wrong importance criterion.

We use Fisher Coefficient of Variation ($\CVF$) to measure Fisher score divergence, where
$\CVF=\mathrm{std}(F_j^{(n)})/\mathrm{mean}(F_j^{(n)})$. 
$\CVF=0$ iff all Fisher scores are equal;
otherwise $\CVF>0$.
Figure~\ref{fig:cvf} shows substantially greater relative dispersion in Fisher scores than in parameter magnitudes after the warm-up period, with \(CV_F\) reaching approximately 17.5 in the evaluated setting. This indicates substantial dispersion in Fisher importance and
is consistent with the Fisher-dominant ranking regime assumed in
Assumption 4, although $CV_F$ alone does not establish the required
rank correspondence.

Figures~\ref{fig:global_accuracy} and
Tables~\ref{tab:pathologicalNU} - \ref{tab:pathological-U}
summarize the experimental evaluation of FedFIbOS across
CIFAR-10 and CIFAR-100 under diverse model and statistical
heterogeneity settings. The learning curves consistently
show that FedFIbOS converges faster during the early
communication rounds and attains higher final test accuracy
than the competing baselines. The performance gains become
more pronounced under severe model heterogeneity and non-IID
data distributions. Table~\ref{tab:modeldist} shows that FedFIbOS consistently
achieves the highest classification accuracy across
both uniform and non-uniform settings, demonstrating
the effectiveness of Fisher-guided submodel selection
under heterogeneous federated learning.
The
quantitative results reported in the tables further validate
these observations, demonstrating that FedFIbOS achieves
consistent improvements over FIARSE, FedBRB, FedRolex, and
HeteroFL across all evaluated experimental settings.

\begin{figure}[t]
\centering
\includegraphics[width=.7\linewidth]{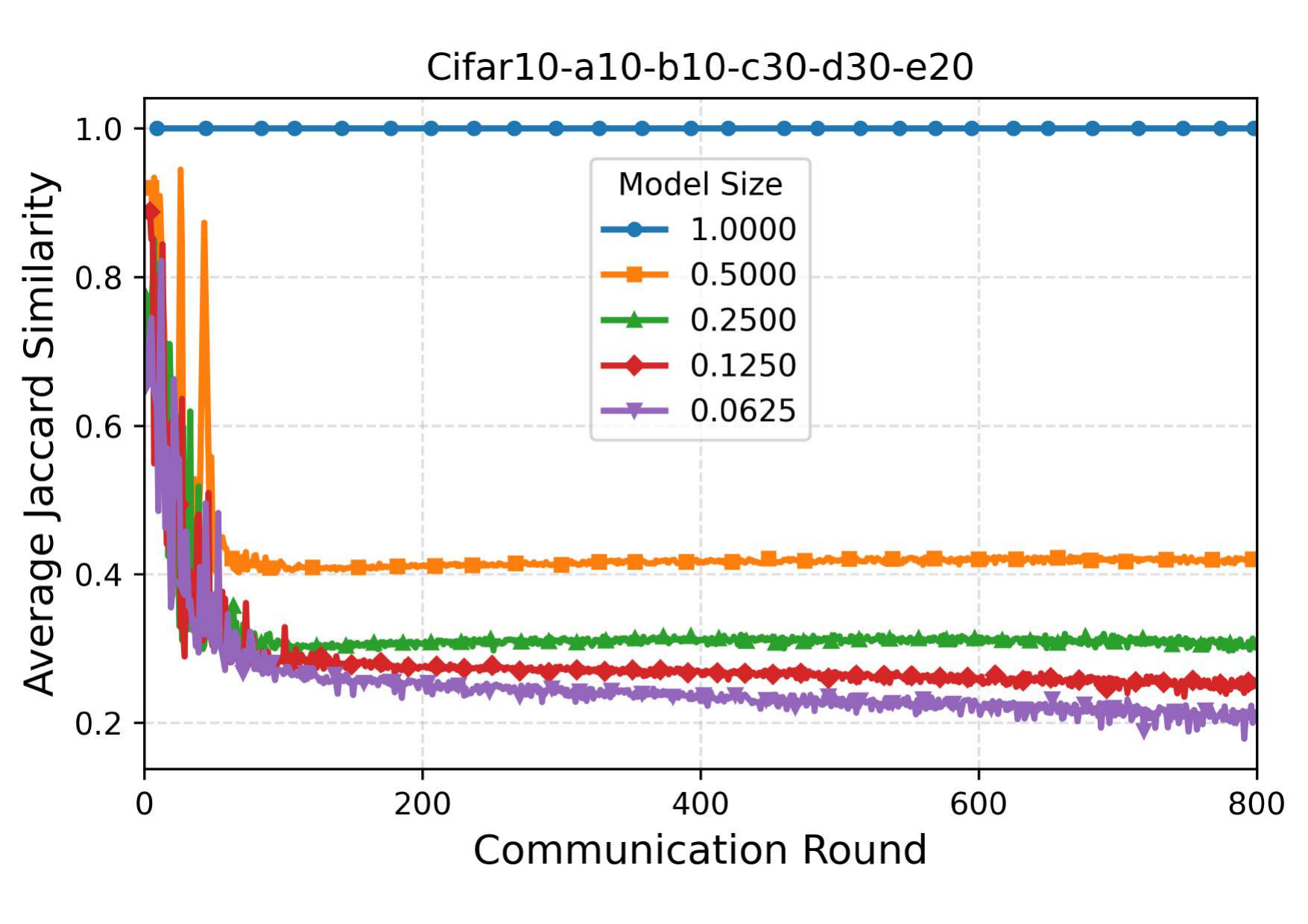}
\caption{Average Jaccard similarity between Fisher-guided
and magnitude-guided parameter selection on CIFAR-10
(non-IID, $\alpha{=}0.3$) for different model sizes.}
\label{fig:jaccard}
\end{figure}

\begin{figure}[t]
\centering
\includegraphics[width=.7\linewidth]
{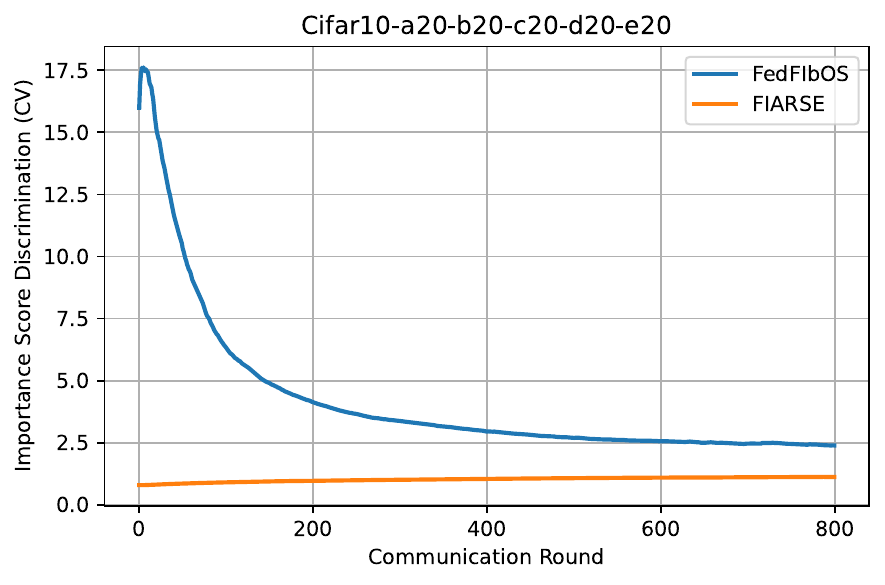}
\caption{Fisher score heterogeneity $\CVF$ (FedFIbOS) versus
magnitude score heterogeneity $\mathrm{CV}_{\mathrm{mag}}$
on CIFAR-10 (non-IID, $\alpha{=}0.3$).}
\label{fig:cvf}
\end{figure}

\begin{table*}[ht]
\centering
\caption{Comparison of classification accuracy (\%) achieved by different federated learning algorithms under non-IID data and non-uniform model distribution a10-b10-c30-d30-e20. The best result is shown in \textbf{bold} and the second-best result is \underline{underlined}.}
\label{tab:pathologicalNU}

\resizebox{.8\textwidth}{!}{
\begin{tabular}{ll|cc|cc|cc|cc|cc}
\toprule

\multirow{2}{*}{\textbf{Dataset}} &
\multirow{2}{*}{\textbf{Heterogeneity}} &
\multicolumn{2}{c|}{\textbf{HeteroFL}} &
\multicolumn{2}{c|}{\textbf{FedRolex}} &
\multicolumn{2}{c|}{\textbf{FedBRB}} &
\multicolumn{2}{c|}{\textbf{FIARSE}} &
\multicolumn{2}{c}{\textbf{FedFIbOS (Ours)}} \\

\cmidrule(lr){3-4}
\cmidrule(lr){5-6}
\cmidrule(lr){7-8}
\cmidrule(lr){9-10}
\cmidrule(lr){11-12}

&
&
\textbf{Local} & \textbf{Global} &
\textbf{Local} & \textbf{Global} &
\textbf{Local} & \textbf{Global} &
\textbf{Local} & \textbf{Global} &
\textbf{Local} & \textbf{Global} \\

\midrule

\multirow{2}{*}{CIFAR-10  } & Low  & 82.26 & 70.36 & 78.15 & 61.12 & 81.34 & 74.32 & \underline{84.89} & \underline{77.85}
 & \textbf{92.10} & \textbf{90.14}\\
 &High & 84.05 & 34.44 & 83.92 &39.01 & 89.9 & 54.48 & 
\underline{91.19} & \underline{59.85} & \textbf{92.80} & \textbf{92.34} \\

\midrule

\multirow{2}{*}{CIFAR-100  } & Low  & \underline{56.83} &  \underline{44.37}& 43.49 & 29.34 & 47.30 & 37.10 & 41.49  & 39.67&\textbf{55.99} & \textbf{ 54.34} \\

 & High & 60.15 & 35.27 & 54.24 & 27.86& 59.25 & 31.74  & \underline{39.69} & \underline{36.09} & \textbf{57.09} & \textbf{ 54.43}\\

\bottomrule
\end{tabular}
}
\end{table*}

\begin{table*}[ht]
\centering
\caption{Comparison of classification accuracy (\%) achieved by different federated learning algorithms under non-IID data and uniform model distribution a20-b20-c20-d20-e20. The best result is shown in \textbf{bold} and the second-best result is \underline{underlined}.}
\label{tab:pathological-U}

\resizebox{.8\textwidth}{!}{
\begin{tabular}{ll|cc|cc|cc|cc|cc}
\toprule

\multirow{2}{*}{\textbf{Dataset}} &
\multirow{2}{*}{\textbf{Heterogeneity}} &
\multicolumn{2}{c|}{\textbf{HeteroFL}} &
\multicolumn{2}{c|}{\textbf{FedRolex}} &
\multicolumn{2}{c|}{\textbf{FedBRB}} &
\multicolumn{2}{c|}{\textbf{FIARSE}} &
\multicolumn{2}{c}{\textbf{FedFIbOS (Ours)}} \\

\cmidrule(lr){3-4}
\cmidrule(lr){5-6}
\cmidrule(lr){7-8}
\cmidrule(lr){9-10}
\cmidrule(lr){11-12}

&
&
\textbf{Local} & \textbf{Global} &
\textbf{Local} & \textbf{Global} &
\textbf{Local} & \textbf{Global} &\textbf{Local} & \textbf{Global} &
\textbf{Local} & \textbf{Global} \\

\midrule

\multirow{2}{*}{CIFAR-10}    & Low  & 88.43 &  79.04 & 86.32 & 72.55 & 81.09 & 78.50 &  \underline{88.80}&\underline{83.95} 
& \textbf{91.80} & \textbf{89.84}\\
 & High & 89.70 & 46.03 & 91.08 & 56.82 & 89.34 & 52.59 
 & \underline{91.75} & \underline{75.76} & \textbf{94.50} & \textbf{92.85} \\

\midrule

\multirow{2}{*}{CIFAR-100}   & Low  & \underline{60.12} & \underline{50.24}& 65.84 & 37.90 & 52.23 & 44.44 & 47.75 & 43.97 & \textbf{56.69} & \textbf{54.09} \\
 & High &\underline{69.82} & \underline{45.13} & 52.68 & 38.08&62.40 & 40.15 & 48.65 &44.37&\textbf{57.75} & \textbf{54.70} \\

\bottomrule
\end{tabular}
}
\end{table*}

\begin{table}[t]
\centering
\footnotesize
\setlength{\tabcolsep}{3pt}
\caption{Comparison of classification accuracy (\%) under uniform (U) and non-uniform (NU) model distributions. CIFAR-10 and CIFAR-100 use Dirichlet $\alpha=0.3$, while AGNews uses $\alpha=1.0$. The best result is shown in \textbf{bold}.}
\label{tab:modeldist}
\resizebox{.4\textwidth}{!}{
\begin{tabular}{ll|cc|cc}
\toprule
\multirow{2}{*}{\textbf{Dist.}} &
\multirow{2}{*}{\textbf{Dataset}} &
\multicolumn{2}{c|}{\textbf{FIARSE}} &
\multicolumn{2}{c}{\textbf{FedFIbOS}\textbf{ (Ours)}} \\
\cmidrule(lr){3-4}
\cmidrule(lr){5-6}
&
&
\shortstack{\textbf{Local}\\\textbf{Accuracy}} &
\shortstack{\textbf{Global}\\\textbf{Accuracy}} &
\shortstack{\textbf{Local}\\\textbf{Accuracy}} &
\shortstack{\textbf{Global}\\\textbf{Accuracy}} \\
\midrule

\multirow{3}{*}{U}
& CIFAR-10  & 82.99& 76.80 & \textbf{89.90} & \textbf{85.29} \\
& CIFAR-100 & 41.25 & 41.15 & \textbf{54.75} & \textbf{52.67} \\
& AGNews   & 92.40 & 68.17 & \textbf{95.15} & \textbf{94.43} \\
\midrule

\multirow{3}{*}{NU}
& CIFAR-10  & 79.70 & 68.05 & \textbf{89.90} & \textbf{84.99} \\
& CIFAR-100 & 37.30 & 35.86 & \textbf{53.10} & \textbf{51.33} \\
& AGNews   & 38.72 & 46.25 & \textbf{95.03} & \textbf{94.13} \\

\bottomrule
\end{tabular}
}
\end{table}

\section{Theoretical Analysis}
\label{sec:theory}

Fisher top-k solves SSP under a Fisher-dominant ranking condition. We use five working
assumptions: Assumptions 1--3 are standard regularity conditions
in heterogeneous FL (Wu et al. 2024), Assumption 4 connects the
Fisher-weighted SSP solution to Fisher top-$k$, and Assumption 5
connects the surrogate to the masking-error coefficient used in
the convergence bound.
\label{sec:assumptions}
\begin{assumption} ($L$-smoothness).
\label{assump:Lsmooth}
For all $n\in[N]$ and $w,v\in\mathbb{R}^d$:
$\|\nabla_w F_n(w\odot M_F^{(n)})-\nabla_v F_n(v\odot
M_F^{(n)})\|_2\leq L\|w-v\|_2$.

\end{assumption}

\begin{assumption} (Bounded variance).
\label{assump:bounded_variance}
$\frac{1}{N}\sum_{n=1}^N\|\nabla F_n(w)
-\nabla F(w)\|_2^2\leq\sigma^2$ for all $w$.
This depends only on $\{P_n\}$, not the selection
criterion.

\end{assumption}
\begin{assumption} (Masking error bound).
\label{assump:masking_error}
For each round $t$, there exists
$\delta_t^{F,2}\in[0,1)$ such that:
\begin{equation}
\left\|
\nabla_{\tilde{x}_t}
F_n\left(
\tilde{x}_t \odot M_F^{(n)}
\right)
\right\|_2^2
\leq
\delta_{F,n,t}^2
\left\|
\tilde{x}_t
\right\|_2^2.
\label{eq:mask_err}
\end{equation}

Let
\begin{equation}
\delta_{F,t}^2
:=
\max_{n\in[N]}
\delta_{F,n,t}^2,
\qquad
\delta_{M,t}^2
:=
\max_{n\in[N]}
\delta_{M,n,t}^2.
\end{equation}

where $\delta_{M,n,t}^2$ is the corresponding client-specific
coefficient under magnitude selection. Thus, $\delta_{F,t}^2$
and $\delta_{M,t}^2$ are the global worst-client masking-error
coefficients used in the convergence bound.

\end{assumption}

\begin{assumption} (Fisher-dominant ranking).
For each client \(n\), round \(t\), and any pair of parameters \(j,k\),
\begin{equation}
F_{t,j}^{(n)}>F_{t,k}^{(n)}
\Longrightarrow
I_{t,j}^{(n)}>I_{t,k}^{(n)},
\qquad
I_{t,j}^{(n)}
=
(F_{t,j}^{(n)})^2\tilde{x}_{t,j}^2.
\end{equation}

Thus, the ordering induced by raw Fisher scores agrees with the
ordering of the Fisher-weighted surrogate scores. This condition
is precisely what makes the raw Fisher top-$k$ rule used by
FedFIbOS coincide with the exact solution of the quadratic SSP.
It is a ranking assumption and is not implied by non-IID data alone.
\end{assumption}

\begin{assumption}[Monotonicity of masking error]
\label{assump:monotonic_masking}
For each client $n$ and round $t$, the masking-error coefficient
$\delta_{S,n,t}^2$ is monotone with respect to the
Fisher-weighted quadratic masking-error surrogate. Specifically,
for any two feasible selections $S_1$ and $S_2$,

\begin{equation}
\epsilon_t^{(n)}(S_1)\leq \epsilon_t^{(n)}(S_2)
\quad\Longrightarrow\quad
\delta_{S_1,n,t}^2\leq \delta_{S_2,n,t}^2.
\end{equation}

This order-preservation condition is sufficient to transfer the
surrogate ordering established by Proposition~1 to the
selection-dependent term in the convergence bound.
\end{assumption}

\begin{remark}[Selection agreement]
\label{lem:equiv}

The two selection rules have identical surrogate values whenever
they select the same parameter set. This requires agreement of
their rankings at the relevant top-$k_n$ boundary, but does not
require their complete rankings to coincide.

\end{remark}
 
\subsection{Magnitude Selection Suboptimal Under Non-IID}

\label{sec:lemma1}

\begin{lemma}[Conditions for Selection Divergence]
Suppose that at round $t+1$ there exists a pair of parameters
$(j,k)$ straddling the top-$k_n$ selection boundary such that
$F_{t+1,j}^{(n)}>F_{t+1,k}^{(n)}$, with $j$ selected and $k$
excluded by the Fisher ranking, and
suppose their magnitudes at the preceding round satisfy
$|\tilde{x}_{t,j}|\approx|\tilde{x}_{t,k}|$.
If the stochastic update has a nonzero probability of producing
\[
|\tilde{x}_{t+1,k}|>|\tilde{x}_{t+1,j}|,
\]
then
\begin{equation}
\label{eq:div}
  \Pr\!\left(
S_M^{(n,t+1)}\neq S_F^{(n,t+1)}
\right)>0.  
\end{equation}

\end{lemma}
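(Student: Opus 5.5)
We reduce the claim to the existence of a positive-probability event on which the magnitude rule and the Fisher rule disagree at the top-$k_n$ boundary. Throughout, $S_F^{(n,t+1)}$ is the Fisher top-$k_n$ set and $S_M^{(n,t+1)}$ is the magnitude top-$k_n$ set at round $t+1$. By hypothesis $j\in S_F^{(n,t+1)}$ and $k\notin S_F^{(n,t+1)}$. Let
\[
E=\bigl\{|\tilde{x}_{t+1,k}|>|\tilde{x}_{t+1,j}|\bigr\}.
\]
The hypothesis gives $\Pr(E)>0$. It therefore suffices to show that on $E$ (possibly intersected with a further event of positive probability) the magnitude rule does not select exactly $S_F^{(n,t+1)}$. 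The proof has two steps.

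\textbf{Step 1 (a swap on $E$).} On $E$, magnitude ranks $k$ strictly above $j$. Since $j$ and $k$ straddle the boundary, either $k\in S_M^{(n,t+1)}$ or $j\notin S_M^{(n,t+1)}$. Indeed, a top-$k_n$ set by magnitude that contains $j$ must contain every parameter of strictly larger magnitude, hence $k$. Ties are broken by a fixed rule, and the strict inequality on $E$ removes ambiguity between $j$ and $k$. Consequently, if $j\in S_M^{(n,t+1)}$ then $k\in S_M^{(n,t+1)}$, and $S_M^{(n,t+1)}\neq S_F^{(n,t+1)}$ because $k\notin S_F^{(n,t+1)}$. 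Otherwise $j\notin S_M^{(n,t+1)}$, and again $S_M^{(n,t+1)}\neq S_F^{(n,t+1)}$ because $j\in S_F^{(n,t+1)}$. So $E\subseteq\{S_M^{(n,t+1)}\neq S_F^{(n,t+1)}\}$.

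\textbf{Step 2 (monotonicity of probability).} Step 1 gives
\[
\Pr\bigl(S_M^{(n,t+1)}\neq S_F^{(n,t+1)}\bigr)\ge\Pr(E)>0,
\]
which is \eqref{eq:div}. The condition $|\tilde{x}_{t,j}|\approx|\tilde{x}_{t,k}|$ is not logically needed for this inclusion. I would remark that it is what makes $\Pr(E)>0$ plausible: a single stochastic step of size $O(\eta_l\eta_s)$ can reorder two nearly equal magnitudes. Fisher does not constrain that step, since the Fisher ranking at $t+1$ is determined by the local gradient statistics under $P_n$ rather than by magnitudes. I would also note, via Remark~\ref{lem:equiv}, that disagreement at the boundary pair is exactly what separates the two surrogate values.

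\textbf{Main obstacle.} The main obstacle is being precise about what is random. $S_F^{(n,t+1)}$ is computed from the EMA $\bar f^{(n,t+1)}$, which depends on the same stochastic update. If $F_{t+1}$ is read as the EMA score rather than the population quantity, the Fisher ordering of $j,k$ must hold jointly with $E$. I would handle this by conditioning on the round-$t$ history and the realized Fisher scores, treating the Fisher ordering as given, as the statement does. I would then interpret the hypothesis as $\Pr(E\mid\text{Fisher ordering})>0$, so the inclusion argument applies conditionally and then unconditionally.
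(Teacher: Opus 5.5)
Your proof is correct and follows the same route as the paper's: on the positive-probability event $|\tilde{x}_{t+1,k}|>|\tilde{x}_{t+1,j}|$, the magnitude rule ranks $k$ above $j$ while the Fisher set contains $j$ but not $k$, so the two top-$k_n$ sets differ, and monotonicity of probability gives the claim. Your case split (any magnitude top-$k_n$ set containing $j$ must also contain $k$) supplies the justification the paper omits when it asserts ``the rankings differ at the boundary, hence the selections differ,'' and you correctly observe that $|\tilde{x}_{t,j}|\approx|\tilde{x}_{t,k}|$ plays no logical role in either argument.
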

\begin{proof} 
Under non-IID data, the client-specific Fisher scores need not be
uniform across parameters. Consider the stated boundary pair $(j,k)$, with
$F_{t+1,j}^{(n)}>F_{t+1,k}^{(n)}$ and approximately equal
magnitudes at round $t$.
After the stochastic local update
$\tilde{x}{t+1,j}=\tilde{x}{t,j}-\eta_l g_{t,j}$, and analogously
for $k$, suppose the update has positive probability of producing
$|\tilde{x}{t+1,k}|>|\tilde{x}{t+1,j}|$. On this event, the
magnitude ranking places $k$ ahead of $j$, whereas the Fisher
ranking retains $j$ ahead of $k$ whenever
$F_{t+1,j}^{(n)}>F_{t+1,k}^{(n)}$. Thus the two rankings differ at
the boundary and consequently their top-$k_n$ selections differ.
Since the magnitude-order reversal occurs with positive
probability by assumption, the selection divergence in Equation \eqref{eq:div}
follows.
\end{proof}
\begin{remark}
Jaccard$(S^F,S^M)\approx 0.22$--$0.42$ empirically
(Figure~\ref{fig:jaccard}) reveals significant differences between $S^F$ and $S^M$ throughout training.
\end{remark}

\subsection{Fisher Optimality and Masking Error Reduction}
\label{sec:thm1}
\label{thm:optimality}
Theorem 1 is stated for the ideal Fisher scores
$F_{t,j}^{(n)}$. In the practical algorithm, these scores are
approximated by the client-wise EMA $\bar{f}_j^{(n,t)}$; the
selection rule in Eq.~(10) therefore implements an empirical-Fisher
approximation to the ideal raw Fisher top-$k$ rule.
\begin{theorem}[Optimality of Fisher Selection]
\label{thm:fisher_optimality}
For fixed client $n$, round $t$, and cardinality
$k_n=\lceil\gamma_n d\rceil$, the solution of SSP (5) is
\begin{equation}
    S^{\star(n,t)}
=
\operatorname{TopK}_{k_n}
\left(
I_{t,j}^{(n)}
\right),
\qquad
I_{t,j}^{(n)}
=
(F_{t,j}^{(n)})^2\tilde{x}_{t,j}^2.
\end{equation}

Under Assumption 4,
\[
S^{\star(n,t)}
=
\operatorname{TopK}_{k_n}
\left(
F_{t,j}^{(n)}
\right)
=
S_F^{(n,t)}.
\]
Hence, under Assumption 4, the raw Fisher top-$k$ rule implemented
by FedFIbOS solves the quadratic SSP. The solution is unique when
the relevant importance scores are distinct.
\end{theorem}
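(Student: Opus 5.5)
The argument has two parts. First we solve the separable SSP exactly by an exchange argument. Then we transfer the ordering from $I$ to $F$ using Assumption 4.

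\emph{Step 1 (SSP reduces to top-$k$ of $I$).} Write $I_j:=I_{t,j}^{(n)}=(F^{(n)}_{t,j})^2\tilde{x}_{t,j}^2\ge 0$. Since
\[
\epsilon_t^{(n)}(S)=\sum_{j\notin S}I_j=\sum_{j=1}^d I_j-\sum_{j\in S}I_j,
\]
and the total $\sum_j I_j$ does not depend on $S$, minimizing $\epsilon_t^{(n)}(S)$ over $|S|=k_n$ is the same as maximizing $\sum_{j\in S}I_j$ over $|S|=k_n$.

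Take any feasible $S$ that is not a top-$k_n$ set. Then there exist $j\notin S$ and $k\in S$ with $I_j>I_k$. Swapping them gives $S'=S\cup\{j\}\setminus\{k\}$, which is still feasible and satisfies $\epsilon(S')=\epsilon(S)-(I_j-I_k)<\epsilon(S)$. So $S$ is not optimal. The minimum exists because the feasible family is finite, and any minimizer must therefore be a $\operatorname{TopK}_{k_n}(I)$ set. Conversely, all top-$k_n$ sets share the same multiset of selected values, so they attain the same objective value. This proves the first display.

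\emph{Step 2 (uniqueness).} Suppose the $I_j$ are distinct, or at least $I_{(k_n)}>I_{(k_n+1)}$ at the boundary. Then the top-$k_n$ set is unique. The swap argument above is then strict for every competitor, so the minimizer is unique.

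\emph{Step 3 (Assumption 4 transfers the ordering to raw Fisher).} Let $S_F=\operatorname{TopK}_{k_n}(F^{(n)}_{t,\cdot})$. For any $j\in S_F$ and $k\notin S_F$ we have $F_j\ge F_k$. Whenever the inequality is strict, Assumption 4 gives $I_j>I_k$. So every selected index beats every excluded index in $I$, which means $S_F$ is a top-$k_n$ set for $I$. By Step 1 it solves the SSP.

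\emph{Main obstacle: ties in $F$ at the boundary.} Assumption 4 only handles strict inequalities, so the delicate case is $F_j=F_k$ with $j\in S_F$ and $k\notin S_F$. There the $I$-ordering can be arbitrary: two parameters with equal Fisher scores can have different $\tilde{x}^2$ and hence different $I$. The conclusion of the theorem can then fail as stated, and a hypothesis on boundary ties is genuinely needed to close this case.

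I would handle it by invoking the theorem's distinctness clause, read as requiring the $F$-scores to be distinct at the top-$k_n$ boundary. Under that reading no boundary tie occurs, and Step 3 applies directly. Alternatively, one can observe that $S_F$ is optimal among the possible tie-breakings of $F$ that are consistent with the $I$-ordering.

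With boundary ties excluded, distinct $F$ values give strict $I$ inequalities across the boundary by Assumption 4. Hence $\operatorname{TopK}(I)$ is unique and equals $S_F$, which gives $S^{\star(n,t)}=S_F^{(n,t)}$.
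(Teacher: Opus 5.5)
Your proof is correct and follows the same route as the paper: write $\epsilon_t^{(n)}(S)$ as the total $\sum_j I_{t,j}^{(n)}$ minus the selected sum to obtain $\operatorname{TopK}_{k_n}(I)$, then use Assumption 4 to pass from $I$ to raw Fisher scores. Your exchange argument and your treatment of Fisher ties at the top-$k_n$ boundary make explicit what the paper compresses into the claim that ``the ordering of $I$ agrees with that of $F$''. That claim needs your boundary-distinctness reading, because Assumption 4 only constrains strict Fisher inequalities.
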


\begin{proof}
Since
\[
\epsilon_t^{(n)}(S)
=
\sum_{j\notin S}I_{t,j}^{(n)}
=
\sum_{j=1}^{d}I_{t,j}^{(n)}
-
\sum_{j\in S}I_{t,j}^{(n)},
\]
minimizing $\epsilon_t^{(n)}(S)$ over all sets of size $k_n$
is equivalent to selecting the $k_n$ largest
$I_{t,j}^{(n)}$, giving
\[
S^{\star(n,t)}
=
\operatorname{TopK}_{k_n}(I_{t,j}^{(n)}).
\]
By Assumption 4, the ordering of $I_{t,j}^{(n)}$ agrees with that
of $F_{t,j}^{(n)}$. Therefore,
\[
\operatorname{TopK}_{k_n}(I_{t,j}^{(n)})
=
\operatorname{TopK}_{k_n}(F_{t,j}^{(n)}),
\]
which is exactly the raw Fisher top-$k$ rule used by FedFIbOS.
Distinct scores imply uniqueness.
\end{proof}
\begin{remark}[Fisher-score dispersion]
The coefficient of variation $CV_F$ measures the relative dispersion
of Fisher importance scores.Figure~4 (Experiments) shows that $CV_F$ becomes substantially larger than
$CV_{\mathrm{mag}}$ after the warm-up period. This demonstrates substantial heterogeneity in Fisher-based parameter importance; however, \(CV_F\) alone does not verify the ranking condition in Assumption 4. We
do not claim that $CV_F$ alone determines the masking-error gap; the
specific magnitude of $CV_F$ is reported and discussed in the Experiments
section.
\end{remark}

\begin{proposition}[Quadratic Masking-Error Improvement]
\label{prop:masking}
Let $S_F^{(n,t)}$ and $S_M^{(n,t)}$ denote the raw Fisher and
magnitude-based selections, respectively. Under Assumption 4,
\begin{equation}
\label{eq:ep_comp}
    \epsilon_t^{(n)}(S_F^{(n,t)})
\leq
\epsilon_t^{(n)}(S_M^{(n,t)}).
\end{equation}

If the two selected sets differ and the relevant importance scores
are distinct, the inequality is strict.

\end{proposition}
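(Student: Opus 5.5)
The plan is to reduce Proposition~\ref{prop:masking} to the optimality statement of Theorem~\ref{thm:fisher_optimality}. Fix client $n$ and round $t$, and write $k_n=\lceil\gamma_n d\rceil$. The magnitude rule $S_M^{(n,t)}=\operatorname{TopK}_{k_n}(|\tilde{x}_{t,j}|)$ has exactly $k_n$ elements, so it is a feasible point of SSP~\eqref{eq:SSP}.

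Theorem~\ref{thm:fisher_optimality}, invoked under Assumption~4, gives $S_F^{(n,t)}=S^{\star(n,t)}$, a minimizer of $\epsilon_t^{(n)}$ over all size-$k_n$ subsets. Evaluating at the feasible set $S_M^{(n,t)}$ then immediately yields the weak inequality~\eqref{eq:ep_comp}. If $S_F^{(n,t)}=S_M^{(n,t)}$, equality holds, which is consistent with Remark~\ref{lem:equiv}.

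For the strict part, I will use the decomposition from the proof of Theorem~\ref{thm:fisher_optimality}:
\[
\epsilon_t^{(n)}(S)=\sum_{j=1}^d I_{t,j}^{(n)}-\sum_{j\in S}I_{t,j}^{(n)}.
\]
This gives
\[
\epsilon_t^{(n)}(S_M^{(n,t)})-\epsilon_t^{(n)}(S_F^{(n,t)})=\sum_{j\in S_F\setminus S_M}I_{t,j}^{(n)}-\sum_{k\in S_M\setminus S_F}I_{t,k}^{(n)}.
\]
Both sets have size $k_n$, so the two difference sets have equal cardinality $m\ge 1$ whenever the selections differ. Every $j\in S_F$ is a top-$k_n$ index for $I$ (again by Assumption~4 and the theorem), and every $k\notin S_F$ lies outside it. Hence $I_{t,j}^{(n)}\ge I_{t,k}^{(n)}$ for all $j\in S_F\setminus S_M$ and $k\in S_M\setminus S_F$. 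Under distinctness of the scores this inequality is strict.

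I will pair the elements of the two difference sets bijectively and sum the $m$ strict inequalities. This makes the difference strictly positive and gives strict inequality in~\eqref{eq:ep_comp}.

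The main obstacle is the correct use of the distinctness hypothesis. The uniqueness statement in Theorem~\ref{thm:fisher_optimality} already gives strictness, because a unique minimizer beats every other feasible set. So I would first try to cite that uniqueness directly, and fall back on the pairing argument only to make explicit which scores must be distinct.

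Distinctness is needed for the surrogate scores $I_{t,j}^{(n)}$, or at least across the top-$k_n$ boundary. Distinct raw Fisher scores together with Assumption~4 already imply that the $I$-ranking is strict, since $F_j>F_k\Rightarrow I_j>I_k$. I would state this reduction explicitly, so that the hypothesis ``relevant importance scores are distinct'' is read as distinctness of the Fisher scores and the strict conclusion follows.
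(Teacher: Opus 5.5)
Your proof is correct, and it is the argument the paper actually relies on, though not the one printed under the proposition. The proof environment following Proposition~\ref{prop:masking} is a misplaced copy of the EMA argument for Proposition~\ref{prop:ema}: it conditions on $R_n(t)=r$, derives \eqref{eq:ema_bias}--\eqref{eq:ema_participant}, and says nothing about $\epsilon_t^{(n)}$. The real justification appears only in the proof of Theorem~\ref{thm:convergence}. There the paper cites the minimality of $S_F^{(n,t)}$ from Theorem~\ref{thm:fisher_optimality} and reads off \eqref{eq:ep_comp}, which is exactly your first step (optimality plus feasibility of $S_M^{(n,t)}$).

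Your exchange argument via $\epsilon_t^{(n)}(S)=\sum_{j}I_{t,j}^{(n)}-\sum_{j\in S}I_{t,j}^{(n)}$ supplies the strict case, which the paper never proves. It also makes visible that nothing about magnitude is used beyond $|S_M^{(n,t)}|=k_n$, so the inequality holds against any size-$k_n$ competitor.

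One caveat you inherit from Theorem~\ref{thm:fisher_optimality}: Assumption~4 is vacuous on ties $F_{t,j}^{(n)}=F_{t,k}^{(n)}$. The identification $\operatorname{TopK}_{k_n}(F_t^{(n)})=\operatorname{TopK}_{k_n}(I_t^{(n)})$, and with it the weak inequality, therefore needs either distinct Fisher scores or a tie-break consistent with $I_t^{(n)}$. For instance, take $d=2$, $k_n=1$, $F_1=F_2=1$, $\tilde{x}_1=1$, $\tilde{x}_2=2$. Breaking the Fisher tie toward index $1$ gives $\epsilon(S_F)=I_2=4>1=I_1=\epsilon(S_M)$. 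So the reduction you already spell out for the strict part (distinct $F$ plus Assumption~4 implies distinct $I$) is in fact what makes the weak part rigorous as well.
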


\begin{proof}
Under the stated coordinate-activity condition, the EMA for parameter $j$ receives one unbiased Fisher observation whenever client $n$ participates. Conditioning on $R_n(t)=r$ therefore gives the standard $r$-step EMA recursion, yielding \eqref{eq:ema_bias}. Under uniform participation, $R_n(t)\sim\mathrm{Binomial}(t,p)$, which gives \eqref{eq:ema_participant}.
\end{proof}

\subsection{Convergence Guarantee}
\label{sec:thm2}

The convergence bound for masked heterogeneous FL takes the form
\begin{equation}
\label{eq:CG}
\min_{t\in[T]}
\|\nabla F(\tilde{x}_t)\|_2^2
\le
C_1(T)+C_2+
\frac{32N}{T}
\sum_{t=0}^{T-1}
\delta_t^2\|\tilde{x}_t\|_2^2,
\end{equation}
where $C_1(T)\to0$ as $T\to\infty$ and $C_2$ depends on
data heterogeneity (Wu et al. 2024). The selection criterion
affects the bound through the masking-error term.


\begin{theorem}[(Convergence under Fisher-guided selection)]
\label{thm:convergence}
Under
Assumptions 1--5 , the
Fisher-guided masked federated optimization procedure satisfies

\begin{equation}
\min_{t \in [T]}
\left\|
\nabla F(\tilde{x}_t)
\right\|_2^2
\leq
C_1(T) + C_2
+
\frac{32N}{T}
\sum_{t=0}^{T-1}
\delta_{F,t}^{2}
\left\|
\tilde{x}_t
\right\|_2^2.
\label{eq:lr}
\end{equation}
where $C_1(T)\to0$ as $T\to\infty$. Moreover, under Assumption 4, the ideal raw Fisher selection
$S_F^{(n,t)}=\operatorname{TopK}_{k_n}(F_t^{(n)})$ minimizes the
surrogate among all size-$k_n$ selections. The implemented
selection $\operatorname{TopK}_{k_n}(\bar f^{(n,t)})$ is an EMA-based
approximation to this ideal rule; Proposition 2 characterizes the
decay of its estimation bias under the stated unbiasedness and
fixed-target assumptions. By Assumption~5, the ordering of the masking-error surrogate
is preserved by the corresponding masking-error coefficients.

\begin{equation}
\label{eq:convg_bound}
\delta_{F,t}^2
\le
\delta_{M,t}^2.
\end{equation}

Thus, the selection-dependent contribution of the convergence
bound is no larger for FedFIbOS than for magnitude-based selection,
with strict improvement whenever the corresponding masking-error
coefficients differ
\end{theorem}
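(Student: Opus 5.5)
The plan is to treat Theorem~\ref{thm:convergence} as three claims and reduce each to results already available.

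\emph{First claim: the bound \eqref{eq:lr}.} I would not redo the descent analysis. Instead I would invoke the masked heterogeneous FL bound \eqref{eq:CG} of Wu et al.\ (2024) and check its hypotheses for the Fisher-guided masks $M_F^{(n)}$. That bound requires three ingredients:
\begin{itemize}
\item $L$-smoothness of the masked local objectives, which is Assumption~\ref{assump:Lsmooth} stated for $M_F^{(n)}$;
\item bounded heterogeneity, which is Assumption~\ref{assump:bounded_variance} and is independent of the mask;
\item a per-round masking-error coefficient, which Assumption~\ref{assump:masking_error} supplies as $\delta_{F,n,t}^2$.
\end{itemize}
The cited analysis works for an arbitrary mask sequence, so the fact that $M_F^{(n)}$ is data-dependent (chosen from the EMA scores before local training in round $t$) does no harm. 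Conditional on $\mathcal{F}_t$ the mask is fixed. The bound \eqref{eq:CG} uses a uniform coefficient per round, so I would bound each client's coefficient by the worst-client value $\delta_{F,t}^2=\max_n \delta_{F,n,t}^2$. This yields \eqref{eq:lr} with the same $C_1(T)\to 0$ and $C_2$.

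\emph{Second claim: surrogate optimality of the ideal rule.} This follows directly from Theorem~\ref{thm:fisher_optimality}. Under Assumption~4, $S_F^{(n,t)}=\operatorname{TopK}_{k_n}(F_t^{(n)})$ equals the exact minimizer of \eqref{eq:SSP}, and therefore $\epsilon_t^{(n)}(S_F^{(n,t)})\le \epsilon_t^{(n)}(S)$ for every feasible $S$. For the implemented EMA rule I would only point to Proposition~\ref{prop:ema}: its initialization bias decays geometrically in the number of participations $R_n(t)$. I would not claim exact optimality for the EMA rule; the statement only asserts approximation.

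\emph{Third claim: the comparison \eqref{eq:convg_bound}.} The chain is:
\begin{enumerate}
\item Proposition~\ref{prop:masking} gives $\epsilon_t^{(n)}(S_F^{(n,t)})\le \epsilon_t^{(n)}(S_M^{(n,t)})$ for each client.
\item Assumption~\ref{assump:monotonic_masking} transfers this to $\delta_{F,n,t}^2\le\delta_{M,n,t}^2$ for every $n$.
\item Taking the maximum over $n$ preserves the inequality, since a pointwise inequality of families implies $\max_n a_n\le\max_n b_n$. This gives $\delta_{F,t}^2\le\delta_{M,t}^2$.
\item Multiplying by $\|\tilde{x}_t\|_2^2\ge 0$ and summing over $t$ shows that Term~3 for FedFIbOS is no larger than for magnitude selection.
\end{enumerate}
For strictness, if $\delta_{F,t}^2<\delta_{M,t}^2$ at some round $t$ with $\tilde{x}_t\neq 0$, the summed term is strictly smaller.

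\emph{Main obstacle.} The delicate point is a consistency issue in step~4: the iterates $\tilde{x}_t$ differ under the two selection rules. To get a clean comparison of the selection-dependent term, I would state it along a common trajectory, i.e.\ compare the coefficients $\delta_t^2$ evaluated at the same $\tilde{x}_t$, which is how Assumption~\ref{assump:monotonic_masking} is phrased. I would then read the final sentence as a statement about the bound's coefficient rather than about two different runs. A secondary caveat is the maximum over clients: a strict per-client gain gives a strict gain in $\delta_{F,t}^2$ only if it occurs at the maximizing client. This is why strictness is stated under the condition that the coefficients themselves differ.
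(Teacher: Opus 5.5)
Your decomposition matches the paper's proof: you invoke the masked heterogeneous-FL bound of Wu et al.\ under Assumptions~1--3, use Theorem~\ref{thm:fisher_optimality} and Proposition~\ref{prop:masking} for the surrogate comparison, and pass to the coefficients through Assumption~\ref{assump:monotonic_masking}. Your explicit max-over-clients step, the strictness caveat at the maximizing client, and the common-trajectory reading are sound refinements of what the paper leaves implicit.

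There is, however, one inconsistency you must repair. In your first claim you instantiate $\delta_{F,n,t}^2$ with the \emph{implemented} masks $M_F^{(n)}=\operatorname{TopK}_{k_n}(\bar f^{(n,t)})$. In your second claim you explicitly decline to assert optimality for that EMA rule. Yet steps~1--2 of your third claim run through Proposition~\ref{prop:masking}, which concerns the \emph{ideal} set $S_F^{(n,t)}=\operatorname{TopK}_{k_n}(F_t^{(n)})$. Assumption~\ref{assump:monotonic_masking} therefore gives $\delta^2_{S_F^{(n,t)},n,t}\le\delta^2_{M,n,t}$ for the ideal set, not for the EMA set whose coefficient appears in the bound you derived. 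Your step~4 conclusion about ``Term~3 for FedFIbOS'' then does not follow unless $\operatorname{TopK}_{k_n}(\bar f^{(n,t)})=\operatorname{TopK}_{k_n}(F_t^{(n)})$. Proposition~\ref{prop:ema} cannot supply that: it controls only the conditional expectation of $\bar f_j^{(n,t)}$, not the realized ordering at the top-$k_n$ boundary.

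The paper sidesteps this by stating the bound itself ``for the ideal raw-Fisher selection,'' so the bound and the comparison refer to the same masks. The price is that the guarantee then concerns an idealized procedure rather than Algorithm~\ref{alg:fgiase}. You can make that choice explicitly. Alternatively, keep the implemented masks and add a bridging hypothesis. For instance, require that $|\bar f_j^{(n,t)}-F_{t,j}^{(n)}|$ be smaller than half the (positive) gap between the $k_n$-th and $(k_n{+}1)$-th largest ideal Fisher scores for every $j$; this forces the two top-$k_n$ sets to coincide.
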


\begin{proof}
The stated bound follows from the masked heterogeneous federated
optimization analysis under Assumptions~1--3 for the ideal
raw-Fisher selection. Under Assumption~4, Theorem~1 establishes
that $S_F^{(n,t)}$ minimizes $\epsilon_t^{(n)}(S)$ among all
size-$k_n$ selections, and Proposition~1 therefore gives \eqref{eq:ep_comp}.
 Thus, by \eqref{eq:convg_bound}, the selection-dependent term in the convergence
bound is no larger for FedFIbOS than for magnitude-based selection.
\end{proof}

\begin{remark}
The convergence guarantee follows directly,
whereas prior work~\citep{wu2024fiarse} analyzes
convergence without establishing the optimality of the
selection criterion.
\end{remark}

\begin{proposition}[Fisher EMA under Partial Participation]
\label{prop:ema}
Let $R_n(t)$ denote the number of rounds in which client $n$ has
participated before round $t$. If, whenever client \(n\) participates and parameter \(j\) is active in its selected submodel, the resulting Fisher estimate is unbiased, then, for a fixed target Fisher value
$F_j^{(n)}$,
\begin{equation}
\label{eq:ema_bias}
 \mathbb{E}
\left[
\bar f_j^{(n,t)}\mid R_n(t)=r
\right]
=
\alpha^r\bar f_j^{(n,0)}
+
(1-\alpha^r)F_j^{(n)}.
\end{equation}
Thus, the initialization bias decays geometrically with the number
of participating rounds. Under uniform participation probability
$p=A/N$,
\begin{equation}
\label{eq:ema_participant}
    \mathbb{E}\!\left[\alpha^{R_n(t)}\right]
=
(1-p+p\alpha)^t.
\end{equation}

\end{proposition}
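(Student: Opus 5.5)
The plan is to prove \eqref{eq:ema_bias} by conditioning on the participation history. Then \eqref{eq:ema_participant} follows from the distribution of $R_n(t)$.

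\textbf{Step 1: conditional recursion.} By Algorithm~\ref{alg:fgiase}, $\bar f^{(n,t+1)}=\bar f^{(n,t)}$ whenever $n\notin\mathcal{A}$, so the EMA for client $n$ changes only at its participation rounds. Let $\tau_1<\dots<\tau_r$ be these rounds before $t$, and write $\hat f_j^{(n)}(\tau_i)$ for the estimate returned at round $\tau_i$. Unrolling \eqref{eq:ema} over the $r$ updates gives the deterministic identity
\[
\bar f_j^{(n,t)}=\alpha^r\bar f_j^{(n,0)}+(1-\alpha)\sum_{i=1}^{r}\alpha^{r-i}\hat f_j^{(n)}(\tau_i).
\]
This is a short induction on $r$.

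\textbf{Step 2: take conditional expectations.} We condition on the participation pattern. The stated hypothesis gives $\mathbb{E}[\hat f_j^{(n)}(\tau_i)\mid\text{pattern}]=F_j^{(n)}$ for the fixed target; this needs parameter $j$ to be active in every participating round, which is the coordinate-activity condition of the statement. The geometric sum $(1-\alpha)\sum_{i=1}^r\alpha^{r-i}=1-\alpha^r$ then yields \eqref{eq:ema_bias}. Averaging over all patterns with $R_n(t)=r$ preserves this expression, because it depends only on $r$.

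The main obstacle is justifying that the conditional expectation of each $\hat f_j^{(n)}(\tau_i)$ is still $F_j^{(n)}$ after conditioning on the participation event. I would argue this from two facts: client sampling is uniform and independent of the gradients, and the target is fixed. Both must be stated as assumptions, since the masks and the iterates actually depend on the history.

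\textbf{Step 3: the participation law.} Sampling $A$ of $N$ clients uniformly without replacement makes client $n$ participate in each round with probability $p=A/N$, independently across rounds. Hence $R_n(t)\sim\mathrm{Binomial}(t,p)$, and the probability generating function gives
\[
\mathbb{E}[\alpha^{R_n(t)}]=\sum_{r}\binom{t}{r}(p\alpha)^r(1-p)^{t-r}=(1-p+p\alpha)^t.
\]
Since $\alpha<1$ and $p>0$, the base $1-p+p\alpha=1-p(1-\alpha)$ lies in $[0,1)$, which gives the geometric decay of the expected initialization bias $\mathbb{E}[\alpha^{R_n(t)}]\,(\bar f_j^{(n,0)}-F_j^{(n)})$.
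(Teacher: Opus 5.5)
Your proposal is correct and follows the paper's own argument: the EMA changes only at participation rounds, so unrolling the $r$ updates and applying unbiasedness gives $\alpha^r\bar f_j^{(n,0)}+(1-\alpha^r)F_j^{(n)}$, and the binomial generating function for $R_n(t)\sim\mathrm{Binomial}(t,p)$ gives the second identity. You are more careful than the paper on two implicit assumptions: that $j$ must be active in every participating round, and that unbiasedness must still hold after conditioning on the participation pattern.
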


\begin{proof}
The EMA of client $n$ is updated only when the client participates.
Conditioning on $R_n(t)=r$ therefore gives the standard $r$-step
EMA recursion, yielding \ref{eq:ema_bias}. Under uniform participation,
$R_n(t)\sim\operatorname{Binomial}(t,p)$, which gives \ref{eq:ema_participant}.
\end{proof}

\section{Conclusion}

We presented FedFIbOS, a Fisher Information-guided submodel
extraction framework for model-heterogeneous federated
learning. By leveraging client-specific Fisher information,
FedFIbOS constructs submodels that better preserve locally
important parameters under heterogeneous data distributions.
Theoretical analysis characterizes the Fisher-weighted quadratic
submodel-selection objective and establishes that the raw Fisher
top-$k$ rule used by FedFIbOS is optimal under the stated
Fisher-dominant ranking condition. Under the corresponding
monotonicity assumption, this yields a no-larger
selection-dependent term in the convergence bound. The proposed method
incurs no additional computational overhead, as Fisher
information is estimated directly from the squared gradients
computed during local optimization and maintained using a
server-side exponential moving average. Extensive experiments
on CIFAR-10, CIFAR-100, and AGNews demonstrated
consistent improvements over state-of-the-art heterogeneous
federated learning methods, including FIARSE, FedBRB,
FedRolex, and HeteroFL, particularly under data and
resource heterogeneity.


 \bibliography{aaai2027}


\end{document}